\documentclass[conference]{IEEEtran}
\IEEEoverridecommandlockouts
\usepackage{cite}
\usepackage[hidelinks]{hyperref}
\usepackage{amsmath,amssymb,amsfonts}
\usepackage{amsthm}
\theoremstyle{plain}

\newtheorem{lemma}{Lemma}
\newtheorem{corollary}{Corollary}
\usepackage{algorithmic}
\usepackage{graphicx}
\usepackage{textcomp}
\usepackage{xcolor}
\usepackage[table]{xcolor}
\def\BibTeX{{\rm B\kern-.05em{\sc i\kern-.025em b}\kern-.08em
    T\kern-.1667em\lower.7ex\hbox{E}\kern-.125emX}}
\usepackage[ruled,vlined,linesnumbered]{algorithm2e}
\usepackage{array}
\usepackage{tikz}
\usetikzlibrary{arrows.meta,positioning,calc,decorations.pathmorphing}
\usepackage[caption=false,font=normalsize,labelfont=sf,textfont=sf]{subfig}
\usepackage{textcomp}
\usepackage{stfloats}
\usepackage{url}
\usepackage{verbatim}
\usepackage{enumitem}
\usepackage{multirow}
\usepackage{booktabs}
\usepackage{fancyhdr}
\fancypagestyle{firstpage}{
    \fancyhf{} % Clear all header and footer fields
    \fancyhead[L]{To be published as a conference paper at ICDE 2027}
    
}

\begin{document}

\title{HCFM: Helmholtz Component Flow Matching for Time Series Anomaly Detection}

\author{\IEEEauthorblockN{Shyam Sundar Saravanan}
\IEEEauthorblockA{\textit{Department of Computer Science} \\
\textit{Missouri University of Science and Technology}\\
Rolla, MO, USA\\
ssdmw@mst.edu}
\and
\IEEEauthorblockN{Tie Luo}
\IEEEauthorblockA{\textit{Department of ECE} \\
\textit{University of Kentucky}\\
Lexington, KY, USA \\
t.luo@uky.edu}
\and
\IEEEauthorblockN{Sanjay Madria}
\IEEEauthorblockA{\textit{Department of Computer Science} \\
\textit{Missouri University of Science and Technology}\\
Rolla, MO, USA\\
madrias@mst.edu}
\and
}

\maketitle
\thispagestyle{firstpage}

\begin{abstract}
Flow Matching (FM) is an efficient simulation-free approach for training flow-based generative models, making it a promising foundation for scalable time-series anomaly detection (TSAD). However, standard FM learns a monolithic vector field in which nominal transport and local density-changing behavior are entangled. This creates a central challenge for TSAD; local expansion and contraction carry crucial anomaly-sensitive information, yet isolating this signal requires expensive Jacobian-trace estimation over the entire field. We propose \emph{Helmholtz Component Flow Matching} (HCFM), an unsupervised TSAD framework that parameterizes the vector field prediction network with separate transport, compression, and residual heads. The transport head is constrained to be divergence-free, so divergence estimation is restricted to the compression head, while a lightweight residual head absorbs the remaining mismatch. This structured parameterization improves scalability and detection performance by separating volume-preserving nominal transport from density-changing compression, yielding cleaner anomaly signals from distinct deviations of nominal behavior. Experiments on large-scale univariate and multivariate TSAD benchmarks show that HCFM consistently improves detection accuracy while maintaining efficient inference compared with 16 baselines. The code is available at \url{https://github.com/TrustAIoT/HCFM__time-series-anomaly/}

\end{abstract}

\begin{IEEEkeywords}
time series, anomaly detection, generative modeling, flow matching, flow-based anomaly detection, vector-field decomposition, divergence-based scoring.
\end{IEEEkeywords}

\section{Introduction}

Time-series anomaly detection (TSAD) is commonly formulated as a one-class learning problem: a model is trained on nominal data and later scores future windows according to their deviation from learned normal behavior. This setting arises in industrial monitoring, cyber-physical systems, server telemetry, medical signals, spacecraft data, and other domains where anomalies are rare, costly, and difficult to label. Many deep TSAD methods therefore model nominality through reconstruction error, prediction error, likelihood, or latent-space distance~\cite{hundman2018detecting, xu2022anomaly, su2019robust, tuli2022tranad, zamanzadeh2024deep}.

A key challenge is that anomalies are not always expressed as large point-level error. Abnormal windows may instead appear through localized structural changes: suppressed variation, abnormal amplitude expansion, distorted morphology, or regime shifts that alter how nearby samples are distributed~\cite{cook2019anomaly,blazquez2021review}. Such effects suggest that useful anomaly evidence may lie not only in how far a window is from nominal behavior, but also in how the learned nominal model locally deforms around it. However, most generative TSAD methods collapse these effects into a single reconstruction error, likelihood, or distance score, making it difficult to isolate the specific source of abnormality~\cite{zamanzadeh2024deep}.

Flow Matching (FM) provides an appealing foundation for scalable TSAD. FM learns a vector field that transports samples from a simple base distribution to the data distribution using an efficient simulation-free objective~\cite{lipman2023flow}. This gives FM several practical advantages over common generative TSAD approaches: it avoids decoder reconstruction as the sole scoring mechanism, does not require iterative diffusion-style denoising, and does not impose the invertibility constraints of normalizing flows. These properties make FM attractive for efficient generative modeling of nominal time-series windows.

However, standard FM predicts the vector field with a monolithic network, so nominal transport and local density-changing behavior are entangled in a single output. Divergence measures local expansion or contraction of the learned flow, and can therefore reveal anomaly-sensitive deformation around abnormal windows. Yet extracting this signal from standard FM requires estimating the Jacobian trace over the entire predicted field, which is computationally expensive and does not distinguish volume-preserving movement from density-changing behavior. Recent divergence-regularized FM methods, such as Flow Divergence Matching (FDM)~\cite{huang2026improving}, recognize the importance of divergence by encouraging the learned field to better match the divergence of the target probability path. Nevertheless, FDM still applies divergence matching to a monolithic field predictor, so nominal transport and local expansion or contraction remain entangled in its output.

We propose \emph{Helmholtz Component Flow Matching} (HCFM), an unsupervised TSAD framework that parameterizes the FM field prediction with transport, compression, and residual heads. HCFM does not train separate flow models; rather, it predicts a single total vector field through an additive head structure. The transport head is constrained to be divergence-free, encouraging it to represent volume-preserving nominal movement. The compression head captures density-changing behavior through divergence, so divergence estimation is required only for this head rather than for the full predicted field. A lightweight residual head absorbs residual mismatch without dominating the transport and compression heads.

Figure~\ref{fig:hcfm_motivation} illustrates the key idea. Standard FM uses a monolithic field that entangles nominal transport with local density-changing behavior. HCFM preserves a single final field but structures it into divergence-free transport, density-changing compression, and lightweight residual components. This component-structured parameterization localizes divergence-sensitive behavior to the compression head rather than the full field, preserving FM efficiency while exposing more interpretable anomaly evidence.

We evaluate HCFM on large-scale univariate and multivariate TSAD benchmarks, including 250 UCR anomaly datasets and six multivariate datasets. Across these settings, HCFM improves over 16 representative baselines. On UCR, HCFM achieves a 10.4\% relative best-F1 improvement over the best non-HCFM baseline, while on multivariate benchmarks it obtains the best aggregate results and the best per-dataset VUS-PR and best F1 on five of six datasets. HCFM also improves inference efficiency over monolithic divergence-based Flow Matching, reducing latency by 35\% relative to FDM. These results show that component-structured Flow Matching yields more informative anomaly scores while maintaining efficient inference.

In summary, our contributions are as follows:
\begin{itemize}[leftmargin=*]
\item Identify a structural limitation of monolithic Flow Matching for TSAD: nominal transport and local density-changing behavior are entangled in the learned vector field, making divergence-based anomaly evidence difficult to isolate.

\item Introduce HCFM, a component-structured Flow Matching framework that represents one final vector field through divergence-free transport, density-changing compression, and lightweight residual components, while retaining standard FM supervision.

\item Develop component-specific divergence scoring that localizes Jacobian-trace estimation to the compression component and combines this signal with reverse-flow base energy, yielding complementary evidence for different anomaly modes.

\item Establish the effectiveness and efficiency of the proposed structure through large-scale experiments on 250 UCR datasets and six multivariate benchmarks, including controlled ablations that isolate the contribution of field structure, divergence source, and residual allocation.

\item Open source code for reproducibility available at \url{https://github.com/TrustAIoT/HCFM__time-series-anomaly/}

\end{itemize}

\begin{figure*}[t]
    \centering
    \includegraphics[width=\textwidth]{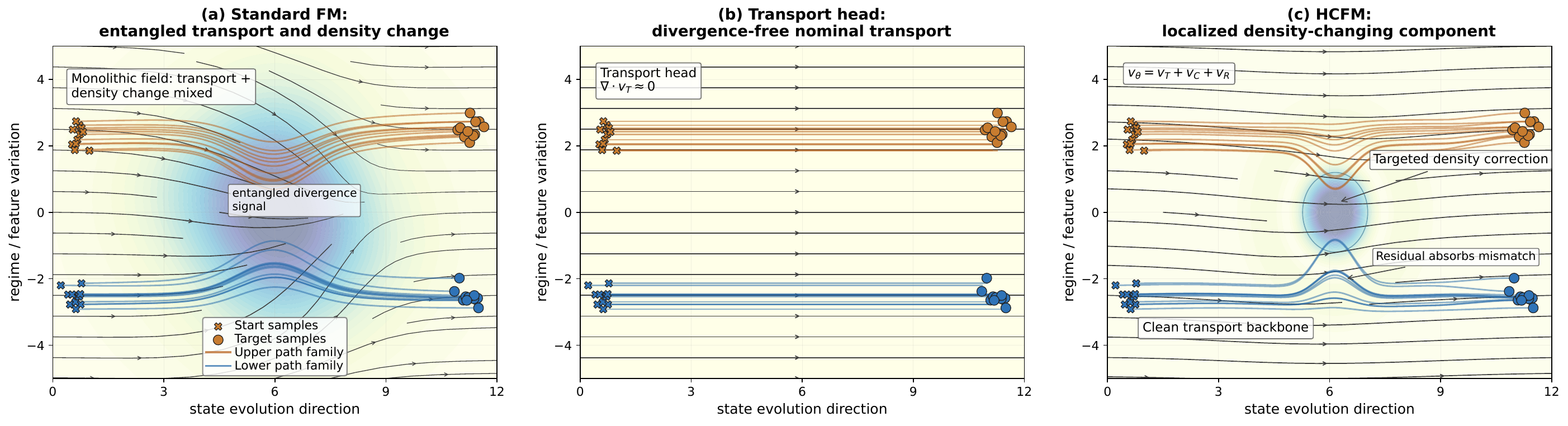}
    \caption{
    Motivating illustration of HCFM.
    (a) In standard Flow Matching, a monolithic predicted field entangles nominal transport with local density-changing behavior, producing an entangled divergence signal that bends otherwise clean transport paths.
    (b) The transport head in HCFM is constrained to be divergence-free, encouraging volume-preserving nominal movement without local expansion or contraction.
    (c) HCFM predicts a single final vector field through additive transport, compression, and residual heads, \(v_\theta = v_T + v_C + v_R\).
    This structured parameterization preserves a clean transport backbone while localizing density-changing behavior to the compression component and using the residual head to absorb residual mismatch.
    }
    \label{fig:hcfm_motivation}
\end{figure*}

\section{Related Work}

\subsection{Time-Series Anomaly Detection}

Time-series anomaly detection (TSAD) spans industrial monitoring, cyber-physical systems, cloud infrastructure, healthcare, spacecraft telemetry, and scientific sensing. Classical approaches include statistical process control, autoregressive modeling, change-point detection, distance-based subsequence analysis, and matrix-profile methods~\cite{yeh2016matrix, hundman2018detecting}; deep methods instead learn nominal temporal structure with recurrent, convolutional, graph-based, attention-based, and Transformer architectures~\cite{su2019robust, audibert2020usad, xu2022anomaly, tuli2022tranad}. Many deep detectors score by reconstruction or prediction error~\cite{su2019robust, geiger2020tadgan, audibert2020usad, tuli2022tranad}, yet expressive decoders can reconstruct anomalies well and limited models over-penalize uncommon but valid nominal patterns. Anomaly Transformer~\cite{xu2022anomaly} and TranAD~\cite{tuli2022tranad} improve long-range dependency modeling, but stronger sequence modeling does not by itself distinguish global deviation, local expansion or contraction, and residual mismatch, which HCFM separates by structuring the generative vector-field predictor.

\subsection{Generative Models and Flow Matching}

Generative detectors model nominal data and score deviations by likelihood, reconstruction, latent-space statistics, or transport. VAEs, GANs, normalizing flows, diffusion models, and score-based models have all been applied~\cite{kingma2014auto, rezende2014stochastic, dinh2017density, papamakarios2021normalizing, ho2020denoising, zamanzadeh2024deep}, each with a characteristic tradeoff: VAEs and autoencoders inherit decoder-driven scoring artifacts~\cite{saravanan2023tsi}, GANs suffer mode collapse and training instability~\cite{thanh2020catastrophic}, normalizing flows restrict architectures to maintain invertibility and tractable Jacobian determinants~\cite{kobyzev2020normalizing}, and diffusion models require iterative denoising or sampling. Flow Matching (FM) learns a vector field that transports a simple base distribution to the data distribution~\cite{lipman2023flow, tong2024improving}, avoiding invertibility constraints, iterative diffusion-style denoising, and sole reliance on reconstruction. Standard FM, however, predicts the field with a monolithic output that couples nominal transport with local density change, making expansion or contraction difficult to isolate as an anomaly signal. Recent time-series FM work includes TSFlow~\cite{kollovieh2025flow}, which adapts FM to probabilistic forecasting using Gaussian-process priors and optimal-transport couplings. Closer to our setting, SGFM~\cite{he2024sgfm} applies conditional FM to TSAD with state-space sequence modeling, but still predicts a monolithic field. HCFM instead structures the learned vector field itself into transport, compression, and residual components, so that volume-preserving transport and density-changing behavior provide distinct anomaly signals.

\subsection{Divergence Matching and Divergence-Free Flows}

Divergence measures the local expansion or contraction of a continuous-time vector field and governs density evolution along the flow. Flow Divergence Matching (FDM)~\cite{huang2026improving} trains the field to match both the target vector field and the divergence of the target probability path, addressing density information that directional matching alone may not capture. However, FDM retains a monolithic field in which transport and density change remain coupled. Divergence-free continuous flows~\cite{benaichouche3813499divergence} are volume-preserving normalizing flows: zero divergence gives incompressible dynamics with unit Jacobian determinant magnitude and simplifies quantities such as differential entropy, but constraining the entire flow prevents it from modeling arbitrary density changes. HCFM instead predicts a single FM field through additive heads and applies the constraint selectively: a divergence-free transport head preserves volume, a compression head captures density change and supplies component-specific divergence as anomaly evidence, and a residual head absorbs the remaining mismatch.

\subsection{Positioning of This Work}

HCFM differs from prior FM-based TSAD primarily in how the vector field is parameterized and diagnosed. It retains a single final field and standard FM supervision, but separates divergence-free transport, density-changing compression, and residual mismatch. Relative to standard FM, it exposes expansion and contraction through a dedicated compression component; relative to FDM, it localizes divergence scoring to that component; and unlike fully divergence-free flows, it preserves density-changing capacity in the overall model. Thus, HCFM provides a {\em component-structured parameterization of Flow Matching} for separating nominal transport from density-changing behavior and enabling targeted derivative-based diagnostics.

\begin{figure*}[t]
    \centering
    \includegraphics[width=\textwidth]{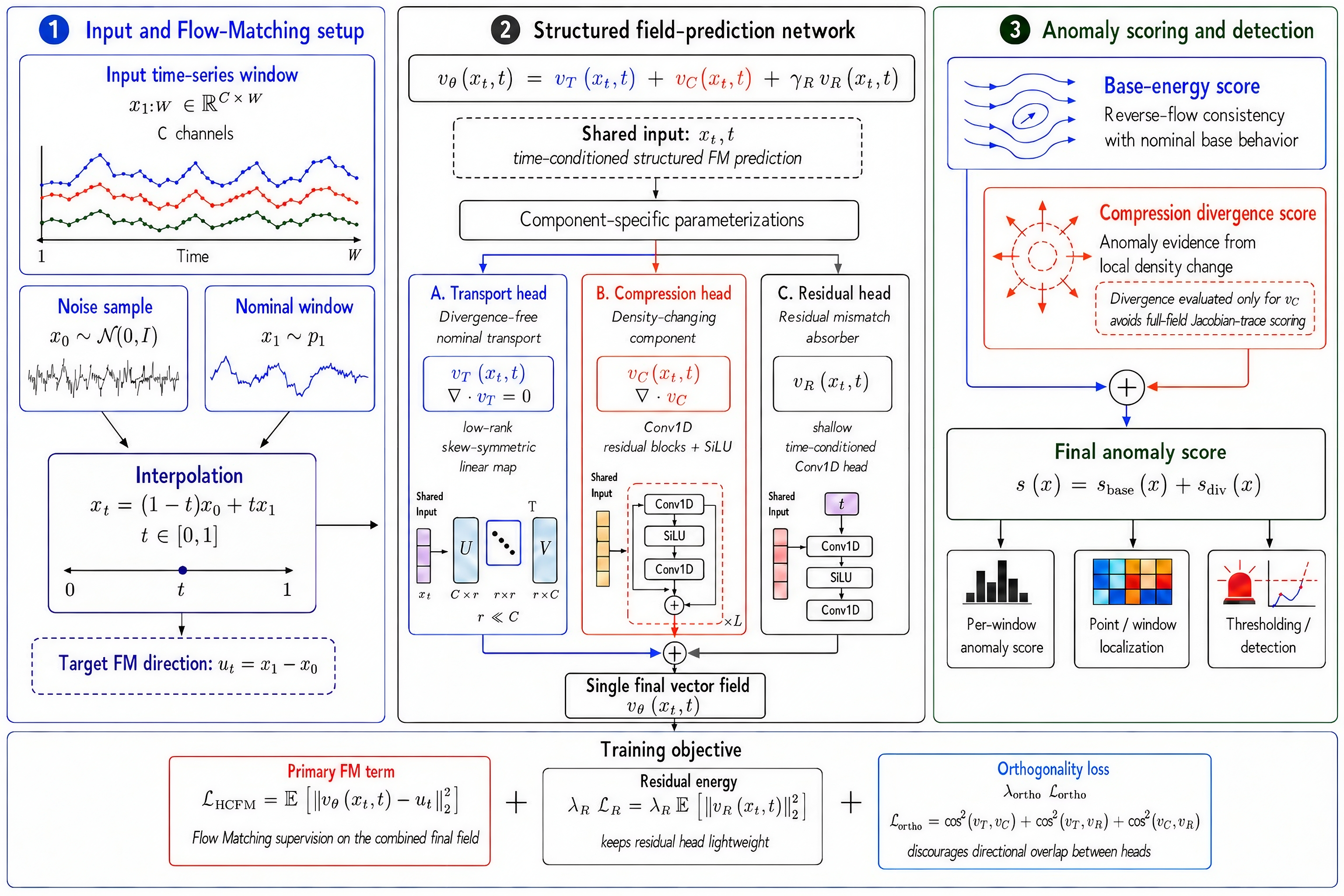}
    \caption{
    Architecture of Helmholtz Component Flow Matching (HCFM).
    HCFM trains a data-space Flow Matching model over interpolated time-series windows and replaces a monolithic vector-field predictor with additive transport, compression, and residual heads.
    The transport head captures divergence-free nominal movement, the compression head provides localized density-changing evidence through component-specific divergence, and the residual head absorbs remaining mismatch while being kept lightweight.
    Training supervises the combined final field and uses residual-energy and orthogonality regularization to shape the internal component allocation.
    At inference, HCFM combines base-energy and compression-divergence for anomaly detection and localization.
    }
    \label{fig:hcfm-architecture}
\end{figure*}

\section{Methodology}
\subsection{Problem Setup and Window Construction}
\label{sec:problem-setup}

We consider one-class time-series anomaly detection, where only nominal data are
available during training and the goal is to assign anomaly scores to future
test windows or timestamps. Let \(
Y=\{y_t\}_{t=1}^{T}\,,
y_t\in\mathbb{R}^{C},
\)
denote a multivariate time series with \(C\) channels and \(T\) timestamps. We
construct overlapping windows of length \(W\) as
\[
x_i
=
\big[y_i,y_{i+1},\ldots,y_{i+W-1}\big]
\in \mathbb{R}^{C\times W},
\]
where \(i\) denotes the window start index. In our experiments, we use stride
one unless stated otherwise.

Before windowing, each channel is standardized using statistics computed only
from the nominal training sequence:
\[
\tilde{y}_{t,c}
=
\mathrm{clip}
\left(
\frac{y_{t,c}-\mu_c}{\max(\sigma_c,\epsilon)},
-\kappa,\kappa
\right),
\]
where \(\mu_c\) and \(\sigma_c\) are the training mean and standard deviation of
channel \(c\), \(\epsilon>0\) is a scale floor, and \(\kappa\) is a clipping
threshold for numerical stability. The same transformation is then applied to
test data using the fixed training statistics, preventing information leakage
from the test sequence.

Let \(p_1\) denote the empirical distribution over nominal training windows.
HCFM learns a continuous-time vector field over the preprocessed window space
\(\mathbb{R}^{C\times W}\). During evaluation, each test window is assigned an anomaly score. Window-level scores are later projected
back to timestamps by aggregating the scores of all windows covering each
timestamp.

\subsection{Conditional Flow Matching for Time-Series Windows}
\label{sec:cfm-windows}

HCFM builds on Conditional Flow Matching (CFM), which trains a time-dependent
vector field to transport samples from a simple base distribution to the nominal
window distribution. For each nominal target window \(x_1\sim p_1\), we draw a
source sample
\(
x_0 \sim \mathcal{N}(0,I),
\)
with the same shape as \(x_1\). We then sample
\(
t\sim \mathcal{U}(0,1),
\)
and construct the intermediate state
\[
x_t
=
(1-t)x_0+t x_1.
\]
Under this linear path, the conditional target vector field is
\[
u_t
=
\frac{d x_t}{dt}
=
x_1-x_0.
\]

The goal is to learn a neural vector field
\[
v_\theta:
\mathbb{R}^{C\times W}\times[0,1]
\rightarrow
\mathbb{R}^{C\times W}
\]
that matches this target direction over nominal training windows. The standard
data-space Flow Matching objective is
\[
\mathcal{L}_{\mathrm{FM}}(\theta)
=
\mathbb{E}_{x_1,x_0,t}
\left[
\left\|
v_\theta(x_t,t)
-
u_t
\right\|_2^2
\right],
\]
where \(x_1\sim p_1\), \(x_0\sim\mathcal{N}(0,I)\), and
\(t\sim\mathcal{U}(0,1)\). This objective gives a
simulation-free regression target for learning nominal transport in the
preprocessed window space. Standard FM predicts \(v_\theta(x_t,t)\) with a
monolithic vector-field network. In contrast, HCFM keeps the same Flow Matching
supervision but replaces the monolithic output parameterization with a
structured additive field.

\subsection{Helmholtz Component Field Parameterization}
\label{sec:hcfm-parameterization}

Standard Flow Matching predicts \(v_\theta(x_t,t)\) with a monolithic output
head. This can entangle volume-preserving nominal movement, local expansion or
contraction of the probability path, and residual mismatch needed to fit the FM
target. HCFM instead parameterizes one final vector field through additive
transport, compression, and residual heads, as illustrated in
Fig.~\ref{fig:hcfm-architecture}:
\[
v_\theta(x_t,t)
=
v_T(x_t,t)
+
v_C(x_t,t)
+
\gamma_R v_R(x_t,t).
\]
Here, \(v_T\), \(v_C\), and \(v_R\) denote the transport, compression, and
residual heads, respectively. The scalar \(\gamma_R\) controls the contribution
of the residual component. HCFM still predicts a single final vector field: the
heads are internal components of one predictor, and Flow Matching supervision is
applied only to their combined output \(v_\theta\).

\paragraph{Transport head}
Let \(d=CW\), and let \(\bar{x}_t\in\mathbb{R}^{d}\) denote the flattened
intermediate window. The transport head is parameterized by a low-rank
skew-symmetric linear map,
\[
\bar{v}_T(\bar{x}_t,t)
=
A_\theta(t)\bar{x}_t,
\]
where
\[
A_\theta(t)
=
U_\theta(t)V_\theta(t)^\top
-
V_\theta(t)U_\theta(t)^\top\,,
U_\theta(t),V_\theta(t)\in\mathbb{R}^{d\times r}.
\]
The factors \(U_\theta(t)\) and \(V_\theta(t)\) are generated from a sinusoidal
embedding of \(t\), and \(r\ll d\) is the transport rank. Since
\(A_\theta(t)\) is skew-symmetric, Lemma~\ref{lem:transport-div-free} gives
\(\nabla_x\cdot v_T=0\). Consequently, the transport flow is locally
volume-preserving by Corollary~\ref{cor:transport-volume-preserving}, so this
head can model nominal movement without acting as a local source or sink.

\paragraph{Compression head}
The compression head is defined as the gradient of a learned scalar potential:
\[
v_C(x_t,t)
=
\nabla_{x_t}\phi_\theta(x_t,t),
\qquad
\phi_\theta:
\mathbb{R}^{C\times W}\times[0,1]\to\mathbb{R}.
\]
The potential \(\phi_\theta\) is implemented with a time-conditioned Conv1D
network using sinusoidal time embeddings, positional features, SiLU activations,
and residual Conv1D blocks. Its divergence is
\[
\nabla_x\cdot v_C(x,t)
=
\Delta_x\phi_\theta(x,t).
\]
By Lemma~\ref{lem:compression-density-change}, this divergence characterizes
instantaneous log-density change under the compression flow. We therefore use
\(\nabla_x\cdot v_C\) as a component-wise signal of local expansion or
contraction.

\paragraph{Residual head}
The residual head \(v_R(x_t,t)\) is an intentionally lightweight
time-conditioned Conv1D field. It consists of an input Conv1D projection, an
additive timestep embedding, a SiLU nonlinearity, and an output Conv1D
projection:
\[
v_R:
\mathbb{R}^{C\times W}\times[0,1]
\rightarrow
\mathbb{R}^{C\times W}.
\]
This head is called residual because it absorbs leftover mismatch after the
transport and compression components, not because it uses architectural skip
connections. It is downweighted by \(\gamma_R\) in the final field and
regularized during training so that it remains a low-capacity mismatch absorber.

\subsection{Training Objective}
\label{sec:hcfm-training-objective}

The primary HCFM loss supervises the combined vector field \(v_\theta\) using
the Flow Matching target:
\[
\mathcal{L}_{\mathrm{HCFM}}
=
\mathbb{E}_{x_1,x_0,t}
\left[
\left\|
v_\theta(x_t,t)
-
u_t
\right\|_2^2
\right].
\]
This loss is applied to the single combined field \(v_\theta\), not separately
to the individual component heads.

To keep the residual head from dominating the structured field, we penalize its
energy:
\[
\mathcal{L}_{R}
=
\mathbb{E}_{x_t,t}
\left[
\left\|
v_R(x_t,t)
\right\|_2^2
\right].
\]
This encourages the residual head to act as a lightweight mismatch absorber
rather than an unconstrained replacement for the transport and compression
heads.

We also include an orthogonality penalty that discourages directional overlap
among the component outputs:
\[
\mathcal{L}_{\mathrm{ortho}}
=
\mathbb{E}_{x_t,t}
\left[
\cos^2(v_T,v_C)
+
\cos^2(v_T,v_R)
+
\cos^2(v_C,v_R)
\right],
\]
where each cosine term is computed after flattening the corresponding component
fields. As shown in Lemma~\ref{lem:ortho-residual-duplication}, each term
equals the normalized squared energy of one head output projected onto the
direction of another. In particular, the terms involving \(v_R\) penalize
residual-head alignment with the transport and compression directions.

The full training objective is
\[
\mathcal{L}_{\mathrm{train}}
=
\mathcal{L}_{\mathrm{HCFM}}
+
\lambda_R \mathcal{L}_{R}
+
\lambda_{\mathrm{ortho}}\mathcal{L}_{\mathrm{ortho}}.
\]
Thus, the Flow Matching term supervises the final combined field, while the
auxiliary losses shape the internal allocation of responsibility among the
component heads.

\begin{lemma}[Divergence-free parameterization of the transport head]
Let \(v_T(x,t)=A(t)x\), where \(A(t)\in\mathbb{R}^{d\times d}\) is independent of \(x\) and skew-symmetric, i.e., \(A(t)^\top=-A(t)\). Then
\[
\nabla_x\cdot v_T(x,t)=0.
\]
\label{lem:transport-div-free}
\end{lemma}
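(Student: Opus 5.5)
The plan is a direct coordinate computation of the divergence as the trace of the spatial Jacobian. Since \(v_T(x,t)=A(t)x\) with \(A(t)\) independent of \(x\), the Jacobian of \(v_T\) with respect to \(x\) is exactly \(A(t)\). So I will first write out the components \([v_T(x,t)]_i=\sum_{j=1}^{d}A_{ij}(t)x_j\). Differentiating each component with respect to \(x_i\) gives \(\partial_{x_i}[v_T]_i=A_{ii}(t)\), because \(A(t)\) does not depend on \(x\).

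Summing over \(i\) then yields
\[
\nabla_x\cdot v_T(x,t)=\sum_{i=1}^{d}A_{ii}(t)=\mathrm{tr}\,A(t).
\]
The remaining step is to show that a skew-symmetric matrix has zero trace. I will argue entrywise: skew-symmetry \(A^\top=-A\) gives \(A_{ii}=-A_{ii}\), hence \(A_{ii}=0\) for every \(i\). Equivalently, \(\mathrm{tr}\,A=\mathrm{tr}\,A^\top=-\mathrm{tr}\,A\). Either way \(\mathrm{tr}\,A(t)=0\), and the identity holds for every \(x\) and every \(t\).

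I will also note explicitly how this applies to the concrete transport head \(A_\theta(t)=U_\theta(t)V_\theta(t)^\top-V_\theta(t)U_\theta(t)^\top\). It is skew-symmetric by construction, since \((UV^\top-VU^\top)^\top=VU^\top-UV^\top\). It is also independent of \(x\), because \(U_\theta,V_\theta\) depend only on \(t\) through the sinusoidal embedding. Finally, the flattening \(\bar{x}_t\in\mathbb{R}^d\) is a linear bijection that preserves the coordinate sum defining divergence, so the statement transfers to the \(C\times W\) window space.

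There is no real obstacle here. The only point needing care is to state that \(A(t)\) has no \(x\)-dependence, so that no product-rule terms arise in the Jacobian.
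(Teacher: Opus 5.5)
Your proof is correct and follows essentially the same route as the paper: the Jacobian of \(A(t)x\) is \(A(t)\), so the divergence is \(\operatorname{tr}A(t)\), which vanishes because \(\operatorname{tr}A=\operatorname{tr}A^\top=-\operatorname{tr}A\). Your extra checks, that \(U_\theta V_\theta^\top - V_\theta U_\theta^\top\) is skew-symmetric and \(x\)-independent and that flattening preserves the divergence, are correct and go slightly beyond what the paper's proof states.
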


\begin{proof}
Since \(A(t)\) is independent of \(x\), the Jacobian of \(v_T(x,t)=A(t)x\) with respect to \(x\) is \(A(t)\). Hence
\[
\nabla_x\cdot v_T(x,t)
=
\operatorname{tr}(A(t)).
\]
Because \(A(t)^\top=-A(t)\),
\[
\operatorname{tr}(A(t))
=
\operatorname{tr}(A(t)^\top)
=
\operatorname{tr}(-A(t))
=
-\operatorname{tr}(A(t)).
\]
Thus \(\operatorname{tr}(A(t))=0\), and therefore \(\nabla_x\cdot v_T(x,t)=0\).
\end{proof}

\begin{corollary}[Transport flow is volume-preserving]
Let \(\Phi_t\) denote the flow map generated by the transport head
\[
\frac{d x(t)}{dt}=v_T(x(t),t).
\]
If \(v_T(x,t)=A(t)x\) with \(A(t)^\top=-A(t)\), then the transport flow is locally volume-preserving:
\[
\left|\det \frac{\partial \Phi_t(x)}{\partial x}\right|=1.
\]
\label{cor:transport-volume-preserving}
\end{corollary}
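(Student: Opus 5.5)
The plan is to study the Jacobian of the flow map, $J(t,x)=\partial\Phi_t(x)/\partial x$, and show that its determinant stays equal to one. Here $\Phi_t$ is the flow started at an initial time, which we take as $0$, with $\Phi_0=\mathrm{id}$. Because $v_T(x,t)=A(t)x$ is linear in $x$, the flow is linear as well. We have $\Phi_t(x)=M(t)x$, where $M(t)$ solves the matrix ODE $\dot M(t)=A(t)M(t)$ with $M(0)=I$. Hence $J(t,x)=M(t)$ does not depend on $x$.

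Existence and uniqueness of $M$ on $[0,1]$ follow from standard linear ODE theory. This needs only continuity (or local integrability) of $t\mapsto A(t)$. That assumption is reasonable here, since $U_\theta$ and $V_\theta$ are generated from a smooth sinusoidal embedding of $t$.

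The key step is Liouville's formula (Jacobi's formula for determinants):
\[
\frac{d}{dt}\det M(t)=\operatorname{tr}\!\big(A(t)\big)\det M(t).
\]
By Lemma~\ref{lem:transport-div-free}, $\operatorname{tr}(A(t))=\nabla_x\cdot v_T=0$. So $\det M(t)$ is constant, and since $\det M(0)=1$, we get $\det M(t)=1$ for all $t$. This gives $|\det \partial\Phi_t/\partial x|=1$.

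The general nonlinear version of the same statement follows the same route. The variational equation $\dot J=\nabla_x v_T(\Phi_t(x),t)\,J$ together with Jacobi's formula gives $\frac{d}{dt}\log|\det J|=\nabla_x\cdot v_T=0$ along trajectories. In that form the corollary holds for any divergence-free field, not only skew-symmetric linear ones.

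I expect the main obstacle to be justifying Jacobi's formula. I would derive it from the expansion
\[
\det(I+hA+o(h))=1+h\operatorname{tr}A+o(h).
\]
Applying this to $M(t+h)=(I+hA(t)+o(h))M(t)$ gives the derivative directly.

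An alternative, sharper argument also works. Skew-symmetry gives
\[
\frac{d}{dt}(M^\top M)=M^\top(A^\top+A)M=0,
\]
so $M(t)$ is orthogonal, which again yields $|\det M(t)|=1$. I would mention this as a remark because it shows the transport flow is in fact a time-dependent rotation.
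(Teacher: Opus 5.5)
Your proof is correct and follows the same route as the paper: the paper invokes the continuous change-of-variables identity $\frac{d}{dt}\log|\det J_t| = \nabla_x\cdot v_T$, which is your Liouville/Jacobi formula, kills the right-hand side with Lemma~\ref{lem:transport-div-free}, and concludes from $J_0=I$. Your additional observations are valid refinements the paper does not need: linearity makes $J=M(t)$ independent of $x$, and $\frac{d}{dt}(M^\top M)=0$ shows $M(t)$ is orthogonal, which gives the stronger conclusion that the transport flow is a time-dependent rotation.
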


\begin{proof}
Let \(J_t=\frac{\partial \Phi_t(x)}{\partial x}\) be the Jacobian of the transport flow map. The continuous change-of-variables relation gives
\[
\frac{d}{dt}\log |\det J_t|
=
\nabla_x\cdot v_T(x(t),t).
\]
By Lemma~\ref{lem:transport-div-free}, \(\nabla_x\cdot v_T(x(t),t)=0\). Hence
\[
\frac{d}{dt}\log |\det J_t|=0.
\]
Since \(J_0=I\), we have \(|\det J_0|=1\), and therefore \(\left|\det J_t\right|=1.\)
\end{proof}
This corollary formalizes the role of the transport head: it can model nominal movement along the learned flow without itself creating local expansion or contraction, leaving density-changing evidence to the compression head.

\begin{lemma}[Compression-head divergence characterizes density change]
Let \(v_C(x,t)\) be a smooth compression-head vector field, and let \(x(t)\) evolve according to
\[
\frac{dx(t)}{dt}=v_C(x(t),t).
\]
Let \(p_t(x)\) be a positive density transported by this compression flow. Then, along a trajectory \(x(t)\),
\[
\frac{d}{dt}\log p_t(x(t))
=
-\nabla_x\cdot v_C(x(t),t).
\]
Thus, positive compression divergence corresponds to local density decrease, while negative compression divergence corresponds to local density increase.
\label{lem:compression-density-change}
\end{lemma}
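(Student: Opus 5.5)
The plan is to derive the identity from the continuity equation satisfied by a density transported by the compression flow, and then evaluate its total time derivative along a characteristic.

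\textbf{Step 1 (continuity equation).} Since \(p_t\) is transported by \(\frac{dx}{dt}=v_C(x,t)\) with \(v_C\) smooth, it satisfies
\[
\partial_t p_t(x)+\nabla_x\cdot\bigl(p_t(x)\,v_C(x,t)\bigr)=0 .
\]
If one wants to justify this rather than take it as the meaning of ``transported,'' test against a smooth compactly supported function \(f\). Use \(\int f(x)\,p_t(x)\,dx=\int f(\Phi_t(y))\,p_0(y)\,dy\) and differentiate in \(t\). Integration by parts then gives the weak form, which holds pointwise for smooth fields.

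\textbf{Step 2 (product rule).} Expand the divergence:
\[
\nabla_x\cdot(p_t v_C)=\nabla_x p_t\cdot v_C+p_t\,\nabla_x\cdot v_C .
\]
Since \(p_t>0\), divide the continuity equation by \(p_t\) and use \(\nabla_x p_t/p_t=\nabla_x\log p_t\). This gives
\[
\partial_t\log p_t+\nabla_x\log p_t\cdot v_C=-\nabla_x\cdot v_C .
\]

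\textbf{Step 3 (chain rule along the trajectory).} The total derivative along \(x(t)\) is
\[
\frac{d}{dt}\log p_t(x(t))=\partial_t\log p_t(x(t))+\nabla_x\log p_t(x(t))\cdot\dot x(t).
\]
Substituting \(\dot x=v_C(x(t),t)\) makes the left side of Step 2 coincide with this total derivative. Hence
\[
\frac{d}{dt}\log p_t(x(t))=-\nabla_x\cdot v_C(x(t),t).
\]
The sign interpretation then follows immediately: positive divergence makes \(\log p_t\) decrease along the trajectory, and negative divergence makes it increase.

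\textbf{Alternative route.} Alternatively, one can use the change of variables \(p_t(\Phi_t(y))\,|\det J_t(y)|=p_0(y)\). Combined with the identity \(\frac{d}{dt}\log|\det J_t|=\nabla_x\cdot v_C\), already invoked in the proof of Corollary~\ref{cor:transport-volume-preserving}, this yields the result directly.

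\textbf{Main obstacle.} The only real subtlety is Step 1. It requires making precise what ``transported by the flow'' means, together with enough regularity (smoothness of \(v_C\) and \(p_t\), and well-posedness of the flow on the time interval) so that the continuity equation holds pointwise and not only weakly. I would handle this by stating these regularity assumptions explicitly. Under them, the Jacobian route is the cleanest because it avoids boundary terms.
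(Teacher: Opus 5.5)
Your proof is correct and follows the paper's argument: the paper also starts from the continuity equation, expands \(\nabla_x\cdot(p_t v_C)\) by the product rule, identifies \(\partial_t p_t + v_C\cdot\nabla_x p_t\) as the total derivative along the trajectory, and divides by \(p_t>0\); you divide before applying the chain rule, which changes nothing. The paper takes the continuity equation as given, so your weak-form justification and the Jacobian-determinant alternative add rigor rather than constitute a different route.
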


\begin{proof}
The transported density satisfies the continuity equation
\[
\frac{\partial p_t}{\partial t}
+
\nabla_x\cdot(p_t v_C)
=
0.
\]
Expanding the divergence term gives
\[
\frac{\partial p_t}{\partial t}
+
v_C\cdot\nabla_x p_t
+
p_t \nabla_x\cdot v_C
=
0.
\]
Along a trajectory satisfying \(dx(t)/dt=v_C(x(t),t)\),
\[
\frac{d}{dt}p_t(x(t))
=
\frac{\partial p_t}{\partial t}
+
v_C\cdot\nabla_x p_t.
\]
Therefore,
\[
\frac{d}{dt}p_t(x(t))
=
-p_t(x(t))\nabla_x\cdot v_C(x(t),t).
\]
Since \(p_t(x(t))>0\), dividing by \(p_t(x(t))\) yields
\[
\frac{d}{dt}\log p_t(x(t))
=
-\nabla_x\cdot v_C(x(t),t).
\]
\end{proof}

This lemma gives a component-wise interpretation of the compression head. Under the flow generated by \(v_C\), its divergence determines the instantaneous log-density change along trajectories. In the full HCFM field, the transport head is divergence-free by construction, while the residual head is lightweight and regularized. Thus, \(\nabla_x\cdot v_C\) serves as a component-specific signed signal of local expansion or contraction, rather than a claim that all density change in the full predicted field is caused solely by \(v_C\).

\begin{lemma}[Orthogonality loss penalizes residual-head directional duplication]
\label{lem:ortho-residual-duplication}
Let \(v_T,v_C,v_R\) be nonzero transport, compression, and residual head outputs at a fixed \((x,t)\). For any distinct pair \(v_i,v_j\in\{v_T,v_C,v_R\}\), let
\[
v_j^{\parallel i}
=
\frac{\langle v_j,v_i\rangle}{\|v_i\|_2^2}v_i
\]
denote the projection of \(v_j\) onto \(v_i\). Then
\[
\frac{\|v_j^{\parallel i}\|_2^2}{\|v_j\|_2^2}
=
\cos^2(v_j,v_i).
\]
Consequently, for
\[
\mathcal{L}_{\mathrm{ortho}}(x,t)
=
\cos^2(v_T,v_C)
+
\cos^2(v_T,v_R)
+
\cos^2(v_C,v_R),
\]
each term equals the normalized squared energy of one head output projected onto the direction of another. In particular, \(\cos^2(v_T,v_R)\) and \(\cos^2(v_C,v_R)\) penalize the fractions of residual-head energy aligned with the transport and compression directions, respectively. Moreover, if
\[
\mathcal{L}_{\mathrm{ortho}}(x,t)\le \epsilon
\qquad \text{for some } \epsilon\ge 0,
\]
then each pairwise aligned-energy fraction is at most \(\epsilon\).
\end{lemma}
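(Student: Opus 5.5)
The proof is a direct computation from the definition of orthogonal projection, followed by the nonnegativity of the cosine terms. I will treat all head outputs as flattened vectors in \(\mathbb{R}^{d}\) with the Euclidean inner product. I will use the standard definition
\[
\cos(v_j,v_i)=\frac{\langle v_j,v_i\rangle}{\|v_j\|_2\|v_i\|_2},
\]
which is well defined because the lemma assumes all three outputs are nonzero.

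\textbf{Step 1: the projection identity.} Since \(v_j^{\parallel i}\) is the scalar \(\langle v_j,v_i\rangle/\|v_i\|_2^2\) times \(v_i\), homogeneity of the norm gives
\[
\|v_j^{\parallel i}\|_2^2=\frac{\langle v_j,v_i\rangle^2}{\|v_i\|_2^4}\,\|v_i\|_2^2=\frac{\langle v_j,v_i\rangle^2}{\|v_i\|_2^2}.
\]
Dividing by \(\|v_j\|_2^2>0\) yields exactly \(\cos^2(v_j,v_i)\). Two further observations complete this step:
\begin{itemize}
\item Symmetry of the inner product gives \(\cos^2(v_j,v_i)=\cos^2(v_i,v_j)\). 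So each term of \(\mathcal{L}_{\mathrm{ortho}}\) can be read as the aligned-energy fraction of either head of the pair onto the other.
\item In particular, choosing \(j=R\) with \(i=T\) and with \(i=C\) shows that the two terms involving \(v_R\) equal the fractions of residual-head energy lying along \(v_T\) and along \(v_C\).
\end{itemize}
The Pythagorean decomposition \(\|v_j\|^2=\|v_j^{\parallel i}\|^2+\|v_j-v_j^{\parallel i}\|^2\) also shows these fractions lie in \([0,1]\), which justifies calling them fractions. This identity is optional for the main claim.

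\textbf{Step 2: the bound.} Each of the three summands in \(\mathcal{L}_{\mathrm{ortho}}(x,t)\) is a square, hence nonnegative. Each summand is therefore at most the full sum, which is at most \(\epsilon\). By Step 1, every pairwise aligned-energy fraction equals one of these summands, so it is at most \(\epsilon\) as well.

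There is no substantive obstacle. The only point requiring care is the nonzero hypothesis, which is needed for both the projection and the cosine to be defined. I would note that in practice the cosine is computed with a small stabilizer in the denominator, but that is outside the lemma as stated.
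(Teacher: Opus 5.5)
Your proposal is correct and follows essentially the same route as the paper: you compute \(\|v_j^{\parallel i}\|_2^2=\langle v_j,v_i\rangle^2/\|v_i\|_2^2\), divide by \(\|v_j\|_2^2\) to obtain \(\cos^2(v_j,v_i)\), and then use nonnegativity of the three summands to get the \(\epsilon\)-bound. Your explicit remark that \(\cos^2(v_j,v_i)=\cos^2(v_i,v_j)\) is a small but useful addition, since it is what lets \(\cos^2(v_T,v_R)\) and \(\cos^2(v_C,v_R)\) be read as residual-head energy fractions, a step the paper leaves implicit.
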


\begin{proof}
By definition of projection,
\[
\|v_j^{\parallel i}\|_2^2
=
\left\|
\frac{\langle v_j,v_i\rangle}{\|v_i\|_2^2}v_i
\right\|_2^2
=
\frac{\langle v_j,v_i\rangle^2}{\|v_i\|_2^2}.
\]
Dividing by \(\|v_j\|_2^2\) gives
\[
\frac{\|v_j^{\parallel i}\|_2^2}{\|v_j\|_2^2}
=
\frac{\langle v_j,v_i\rangle^2}{\|v_j\|_2^2\|v_i\|_2^2}
=
\cos^2(v_j,v_i).
\]
Applying this identity to the pairs \((v_T,v_C)\), \((v_T,v_R)\), and \((v_C,v_R)\) gives the stated interpretation of \(\mathcal{L}_{\mathrm{ortho}}(x,t)\). Since all terms in \(\mathcal{L}_{\mathrm{ortho}}(x,t)\) are nonnegative, \(\mathcal{L}_{\mathrm{ortho}}(x,t)\le \epsilon\) implies that each individual alignment term is also at most \(\epsilon\).
\end{proof}

This result provides a pointwise geometric interpretation of the orthogonality penalty: it encourages the transport, compression, and residual heads to occupy distinct output directions at each \((x,t)\), penalizing the residual head from duplicating directions used by the transport or compression heads.

\subsection{Inference and Anomaly Scoring}
\label{sec:hcfm-scoring}

After training on nominal windows, HCFM scores each test window using two
signals: reverse-flow consistency with the Gaussian base distribution and
compression-head divergence along the learned trajectory. Given a test window
\(x_1\), we integrate the learned field backward from \(t=1\) to
\(t=0\), obtaining a base-space endpoint \(\hat{x}_0\) and trajectory
states \(\{(x_{t_k},t_k)\}_{k=1}^{K}\). Since nominal windows should map to
typical samples under the base distribution, we define the base-energy score as
\[
s_{\mathrm{base}}(x_1)
=
\frac{1}{2}\|\hat{x}_0\|_2^2.
\]

The second signal is the divergence of the compression head. At each trajectory
state, we estimate
\[
d_C(x_{t_k},t_k)
=
\nabla_x\cdot v_C(x_{t_k},t_k)
\]
using Hutchinson trace estimation. Unlike full-field divergence scoring, this
only evaluates the compression component, which is the head explicitly designed to capture local expansion and contraction. We aggregate the compression divergence
along the trajectory:
\[
s_{\mathrm{div}}(x_1)
=
\frac{1}{K}
\sum_{k=1}^{K}
d_C(x_{t_k},t_k).
\]
The main HCFM score combines base consistency with compression-divergence
evidence:
\[
s_{\mathrm{HCFM}}(x_1)
=
s_{\mathrm{base}}(x_1)
+
s_{\mathrm{div}}(x_1).
\]

For timestamp-level evaluation, window scores are projected back to individual
timestamps by averaging the scores of all windows covering each timestamp.

\subsection{Computational Complexity}
\label{sec:complexity}

HCFM reduces the cost of divergence-based scoring by avoiding Jacobian tracing over the full learned vector field. Let \(K\) denote the number of reverse-flow solver steps, \(P\) the number of Hutchinson probes, and \(\mathcal{C}_v\) the cost of one monolithic vector-field evaluation. Jacobian-free FM scoring only integrates the reverse ODE and computes the base energy, resulting in \(O(K\mathcal{C}_v)\) inference time. Although efficient, this score does not explicitly capture local density-change information.

Monolithic divergence-based scoring, including FDM-style scoring, additionally estimates \(\nabla\!\cdot v_\theta\) over the full vector field. With Hutchinson trace estimation, this gives \(O(K\mathcal{C}_v + KP\mathcal{C}^{J}_v)\) time, where \(\mathcal{C}^{J}_v\) is the cost of a Jacobian-vector or vector-Jacobian product through the full predictor. The derivative computation is therefore applied to the entire field, including components that may primarily model nominal transport rather than anomaly-relevant density change.

HCFM uses the same reverse-flow integration but restricts divergence scoring to the compression component. \(v_T\) is divergence-free by construction, \(v_C\) captures local compression and expansion, and \(v_R\) absorbs residual mismatch. The resulting inference cost is \(O(K(\mathcal{C}_T+\mathcal{C}_C+\mathcal{C}_R)+KP\mathcal{C}^{J}_C)\), where \(\mathcal{C}^{J}_C\) is the derivative cost of the compression head. Thus, HCFM preserves divergence-aware scoring while replacing full-field Jacobian tracing with component-specific derivative estimation.

Figure~\ref{fig:derivative-target} summarizes this allocation. HCFM is matched to the monolithic predictor in total parameter count, so the comparison does not rely on increased model capacity. However, divergence is evaluated only on the compression component. In this configuration, \(v_C\) contains \(37.6\%\) of the predictor parameters, so HCFM traces a \(0.376\times\) derivative target, corresponding to \(62.4\%\) fewer traced parameters than full-field divergence.

\begin{figure}[t]
    \centering
    \includegraphics[width=\columnwidth]{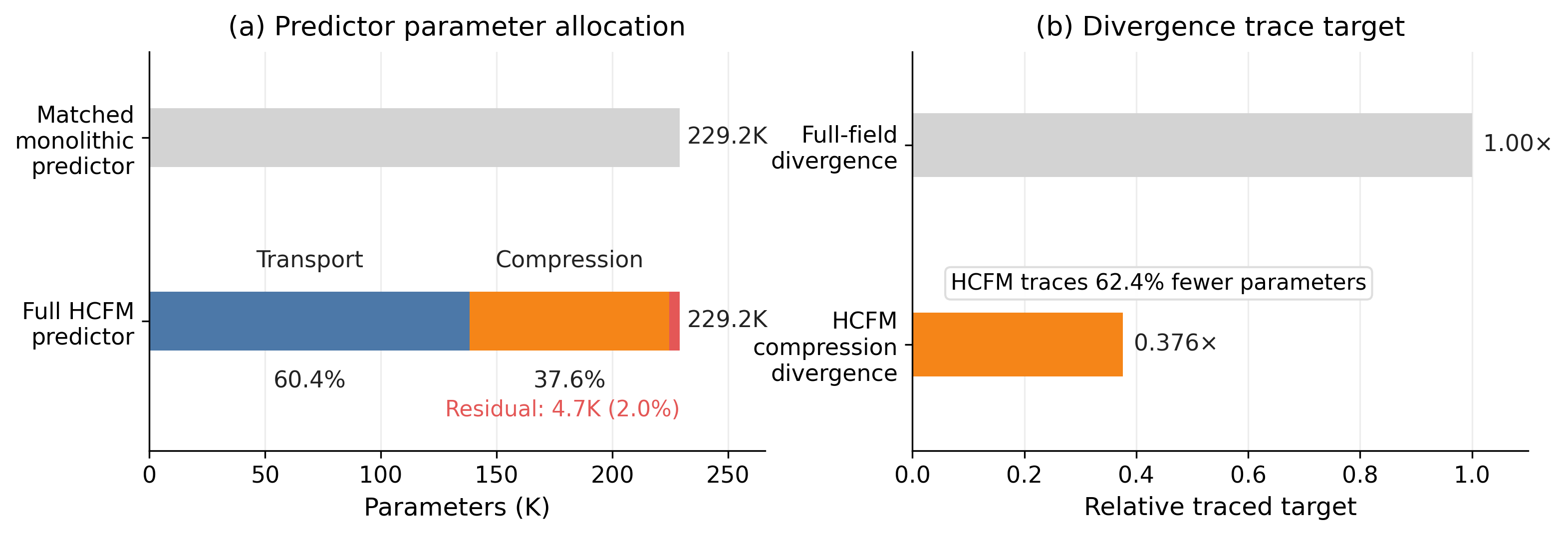}
    \caption{Parameter and divergence-trace target allocation for HCFM. (a) HCFM is matched to a monolithic predictor with the same total parameter budget. Most capacity is assigned to transport and compression, while the residual head remains lightweight. (b) HCFM differentiates only the compression component for divergence scoring, reducing the traced target to \(0.376\times\) of the full-field monolithic target in this configuration.}
    \label{fig:derivative-target}
\end{figure}

\begin{table}[t]
\caption{HCFM structured vector-field network and training and inference hyperparameters.}
\centering
\resizebox{\columnwidth}{!}{%
\begin{tabular}{cccc}
\toprule
Operation & Kernel / Stride & Units / Shape & Activation \\
\midrule
\textbf{Shared Input} \\
Input \(x_t\) & -- & \(B \times C \times W\) & -- \\
Timestep embedding & -- & \(d_{\mathrm{emb}}\) & SiLU \\
\midrule
\textbf{Transport Head} \\
Low-rank skew transport & -- & \(v_T(x_t,t)\in\mathbb{R}^{C\times W}\) & -- \\
\midrule
\textbf{Compression Head} \\
Input Conv1D & \((3,1)\) & hidden size & SiLU \\
ResidualBlock + GroupNorm (x3) & \((3,1)\) & hidden size & SiLU \\
Scalar-potential projection & -- & \(\phi_\theta(x_t,t)\in\mathbb{R}\) & -- \\
Gradient output & -- & \(v_C(x_t,t)=\nabla_{x_t}\phi_\theta(x_t,t)\) & -- \\
\midrule
\textbf{Residual Head} \\
Input Conv1D & \((3,1)\) & hidden size & -- \\
Add timestep embedding & -- & hidden size & SiLU \\
Conv1D output projection & \((3,1)\) & \(B \times C \times W\) & -- \\
Residual output & -- & \(v_R(x_t,t)\in\mathbb{R}^{C\times W}\) & -- \\
\midrule
\multicolumn{4}{l}{\textbf{Hyperparameters}} \\
Window size \(W=64\) & Hidden size \(128\) & \(d_{\mathrm{emb}}=32\) & Batch size \(256\) \\
Transport rank \(r=8\) & Iterations \(15{,}000\) & Learning rate \(1\mathrm{e}{-4}\) & Optimizer AdamW \\
Weight decay \(1\mathrm{e}{-4}\) & Residual scale \(\gamma_R=0.25\) & \(\lambda_R=5\mathrm{e}{-4}\) & \(\lambda_{\mathrm{ortho}}=10\) \\
ODE solver RK4  & ODE steps \(8\) & Hutchinson probes \(4\) \\
\bottomrule
\end{tabular}%
}
\label{tab:hcfm_arch}
\end{table}

\section{Experimental Setup}
\label{sec:experiment-setup}

We evaluate HCFM against representative time-series anomaly detection baselines
in both univariate and multivariate settings. For univariate anomaly detection,
we use the UCR Time Series Anomaly Archive. For multivariate anomaly detection,
we evaluate on GECCO, SWAN, MSL, SMAP, PSM, and SMD. These datasets span
diverse application domains, including spacecraft telemetry, server monitoring,
water-quality monitoring, and space-weather monitoring. Dataset statistics are
reported in Table~\ref{tab:dataset-stats}.

\begin{table}[t]
\centering
\caption{Statistics of the datasets used in our experiments. AR denotes anomaly ratio.}
\label{tab:dataset-stats}
\resizebox{\columnwidth}{!}{%
\begin{tabular}{lccccccc}
\toprule
\multirow{2}{*}{\textbf{Property}} &
\multicolumn{7}{c}{\textbf{Dataset}} \\
\cmidrule(lr){2-8}
& UCR & SMAP & MSL & GECCO & SMD & PSM & SWAN \\
\midrule
\# Sub-datasets     & 250       & 55      & 27     & 1      & 28      & 1      & 1      \\
\# Dimensions ($C$) & 1         & 25      & 55     & 9      & 38      & 25     & 38     \\
\# Train size       & 5,302,449 & 140,825 & 58,317 & 69,260 & 708,405 & 132,481 & 60,000 \\
\# Test size        & 14,051,317 & 444,035 & 73,729 & 69,261 & 708,420 & 87,841 & 60,000 \\
AR (\%) & 0.225 & 13.13 & 10.72 & 1.05 & 4.16 & 27.76 & 32.60 \\
\bottomrule
\end{tabular}%
}
\end{table}

\subsection{Datasets}
\label{sec:datasets}

\paragraph{\textbf{Univariate benchmark.}}
For the univariate setting, we use the UCR Time Series Anomaly Archive
\cite{keogh2021dataset}. The UCR benchmark contains 250 curated time series
from diverse real-world domains. Each dataset consists of a nominal training
segment and a test segment containing an annotated anomalous interval. The
benchmark includes point, contextual, and collective anomaly patterns, making it
a challenging testbed for one-class TSAD. Prior work has noted that many TSAD
benchmarks contain annotation errors, trivial anomalies, or unrealistic anomaly
densities \cite{wu2021current}; UCR was designed to mitigate many of these
issues and provides a more reliable evaluation environment.

\paragraph{\textbf{Multivariate benchmarks.}}
For the multivariate setting, we use six widely adopted real-world TSAD
benchmarks. SMAP and MSL contain spacecraft telemetry from NASA missions~\cite{hundman2018detecting}; PSM
and SMD contain server and infrastructure monitoring signals~\cite{abdulaal2021practical,su2019robust}; GECCO contains
water-quality measurements from a distribution monitoring system~\cite{moritz_2018_3884398}; and SWAN
contains space-weather monitoring signals~\cite{2020NatSD}. These datasets contain multichannel
time series with ground-truth anomaly intervals and cover diverse operational
conditions, anomaly densities, and inter-channel dependencies.

\subsection{Baselines and Variants}
\label{sec:baselines}

We compare HCFM against representative reconstruction-based,
Transformer-based, diffusion-based, contrastive, and flow-based baselines. For
reconstruction-based detection, we compare against established baselines such as
LSTM-AE, DENSE-AE, LSTM-DT, USAD~\cite{audibert2020usad},
OmniAnomaly~\cite{su2019robust}, DONUT~\cite{donut18},
TadGAN~\cite{geiger2020tadgan}, and TSI-GAN~\cite{saravanan2023tsi},
where applicable. We additionally include MERLIN~\cite{nakamura2020merlin},
a discord-based time-series anomaly detection baseline. For Transformer-based
detection, we include TranAD~\cite{tuli2022tranad} and Anomaly
Transformer~\cite{xu2022anomaly}. We also compare against recent diffusion- and
contrastive-learning baselines, including DiffAD~\cite{xiao2023imputation},
ImDiffusion~\cite{chen2023imdiffusion}, DCDetector~\cite{yang2023dcdetector},
and CARLA~\cite{darban2025carla}.

For flow-based baselines, we include vanilla data-space Flow Matching, Flow
Divergence Matching (FDM), and SGFM~\cite{he2024sgfm}. Vanilla FM uses the same
data-space Conditional Flow Matching objective as HCFM, but predicts the vector
field with a monolithic network. FDM incorporates divergence information into a
monolithic FM predictor. SGFM is a recent Flow-Matching-based TSAD method that
combines conditional Flow Matching with state-space sequence modeling. These
baselines isolate whether HCFM's gains arise from Flow Matching itself, from
divergence information alone, from sequence-modeling improvements in prior
FM-based TSAD, or from the proposed component-wise field parameterization.

For HCFM, the final model uses the transport, compression, and residual heads
described in Sec.~\ref{sec:hcfm-parameterization}. The main score family
combines reverse-flow base energy with compression-head divergence. We also
evaluate score variants based on transport energy, compression energy,
residual energy, centered compression divergence, full-divergence diagnostics,
and smoothed score transformations. These variants are used for ablation and
mechanism analysis.

\begin{table*}[t]
\centering
\caption{Aggregate UCR results over 250 datasets.}
\label{tab:ucr_aggregate_results}
\resizebox{\textwidth}{!}{%
\begin{tabular}{|l|c|c|c|c|c|c|c|}
\hline
\textbf{Model} &
\textbf{Best F1} $\uparrow$ &
\textbf{Precision} $\uparrow$ &
\textbf{VUS-PR} $\uparrow$ &
\textbf{AUROC} $\uparrow$ &
\textbf{AUPRC/AP} $\uparrow$ &
\textbf{FP@95R} $\downarrow$ &
\textbf{Hundman F1} $\uparrow$ \\
\hline
TadGAN~\cite{geiger2020tadgan} & 0.131 & 0.092 & 0.332 & 0.641 & 0.118 & 0.617 & 0.154 \\
\hline
DENSE-AE~\cite{alnegheimish2022sintel} & 0.159 & 0.136 & 0.352 & 0.662 & 0.147 & 0.584 & 0.181 \\
\hline
LSTM-DT~\cite{alnegheimish2022sintel} & 0.320 & 0.289 & 0.442 & 0.731 & 0.306 & 0.462 & 0.342 \\
\hline
DONUT~\cite{donut18} & 0.351 & 0.325 & 0.458 & 0.746 & 0.337 & 0.439 & 0.371 \\
\hline
LSTM-AE~\cite{alnegheimish2022sintel} & 0.355 & 0.301 & 0.472 & 0.754 & 0.351 & 0.427 & 0.384 \\
\hline
MERLIN~\cite{nakamura2020merlin} & 0.414 & 0.402 & 0.503 & 0.781 & 0.419 & 0.386 & 0.432 \\
\hline
TSI-GAN~\cite{saravanan2023tsi} & 0.468 & 0.445 & 0.529 & 0.804 & 0.462 & 0.349 & 0.489 \\
\hline
DiffAD~\cite{xiao2023imputation} & 0.479 & 0.452 & 0.532 & 0.811 & 0.474 & 0.337 & 0.501 \\
\hline
TranAD~\cite{tuli2022tranad} & 0.512 & 0.470 & 0.561 & 0.829 & 0.516 & 0.306 & 0.532 \\
\hline
ImDiffusion~\cite{chen2023imdiffusion} & 0.558 & 0.530 & 0.579 & 0.848 & 0.557 & 0.278 & 0.581 \\
\hline
Vanilla FM & 0.562 & 0.535 & 0.589 & 0.854 & 0.566 & 0.270 & 0.586 \\
\hline
SGFM~\cite{he2024sgfm} & 0.578 & 0.545 & 0.602 & 0.861 & 0.584 & 0.257 & 0.603 \\
\hline
FDM & 0.588 & 0.552 & 0.606 & 0.866 & 0.594 & 0.248 & 0.612 \\
\hline
CARLA~\cite{darban2025carla} & 0.597 & 0.560 & 0.609 & 0.872 & 0.603 & 0.239 & 0.621 \\
\hline
Anomaly Transformer~\cite{xu2022anomaly} & 0.637 & 0.590 & 0.631 & 0.887 & 0.642 & 0.216 & 0.663 \\
\hline
DCDetector~\cite{yang2023dcdetector} & 0.672 & 0.602 & 0.651 & 0.899 & 0.675 & 0.201 & 0.692 \\
\hline
\rowcolor{green!15}
HCFM & \textbf{0.742} & \textbf{0.705} & \textbf{0.694} & \textbf{0.922} & \textbf{0.757} & \textbf{0.171} & \textbf{0.776} \\
\hline
\end{tabular}
}
\end{table*}

\subsection{Evaluation Metrics}
\label{sec:metrics}

We evaluate anomaly detection using ranking-based, tolerance-aware, and thresholded metrics. Unless otherwise stated, metrics are computed at the timestamp level after projecting window scores back to timestamps using mean-overlap aggregation.

For ranking-based metrics, we report AUROC and AUPRC/AP, where AUPRC is computed as average precision (AP). AUPRC/AP is especially informative for imbalanced TSAD benchmarks because anomalous timestamps are typically rare. We also report VUS-PR, which extends precision--recall evaluation by incorporating temporal tolerance around anomaly intervals. We use a VUS-PR sliding tolerance window of \(W=64\) timestamps,
matching the time-series window length used during preprocessing.

For thresholded metrics, we report best F1, precision, FP@95R, and Hundman F1 where applicable. Best F1 is the maximum timestamp-level F1 over all score thresholds, and precision is reported at the same operating point. FP@95R reports the false-positive rate at the threshold required to achieve 95\% recall. For Hundman F1, predicted anomalous timestamps are grouped into contiguous intervals. A predicted interval is counted as a true positive if it overlaps any ground-truth anomaly interval; otherwise, it is counted as a false positive. Ground-truth intervals with no overlapping prediction are counted as false negatives, and F1 is computed from these interval-level counts.

The reported tables use these metrics as follows. For UCR, we report best F1, precision, VUS-PR, AUROC, AUPRC/AP, FP@95R, and Hundman F1. For multivariate per-dataset results, we report VUS-PR and best F1. For aggregate multivariate results, we report AUROC, AUPRC/AP, FP@95R, and Hundman F1. Aggregate multivariate results are computed by averaging each metric across the six datasets.

\subsection{Implementation Details}
\label{sec:implementation-details}

For HCFM, we use overlapping windows of length \(W=64\) with stride one across datasets; this fixed setting is not imposed on all baselines, which retain authors' recommended window settings when available. Each channel is standardized using training-only statistics, and the
same transformation is applied to the test sequence. HCFM is trained in data
space using the linear interpolation path described in
Sec.~\ref{sec:cfm-windows}. Table~\ref{tab:hcfm_arch} summarizes the HCFM architecture and training hyperparameters. All Flow Matching variants use matched preprocessing, training budget, and reverse-integration settings wherever applicable.

The transport head uses a low-rank skew-symmetric parameterization, the
compression head uses a scalar-potential Conv1D network with residual Conv1D
blocks and SiLU activations, and the residual head uses a lightweight
time-conditioned Conv1D field. The residual head is downweighted in the
combined vector field and regularized using the residual-energy penalty and
orthogonality loss described in Sec.~\ref{sec:hcfm-training-objective}.

During inference, test windows are reverse-integrated through the learned field
to obtain base-space endpoints and trajectory states. HCFM computes base-energy
and compression-divergence scores along this trajectory. Compression divergence
is estimated only for the compression head using Hutchinson trace estimation,
avoiding full-field Jacobian-trace scoring.

\subsection{Window-Size Sensitivity}
We additionally evaluate HCFM with \(W\in\{16,32,64,128\}\) across 15 UCR datasets while keeping the remaining protocol fixed. Relative to \(W=64\), mean Best F1 is approximately 30\% lower at \(W=16\) and 19\% lower at \(W=32\). \(W=128\) improves mean Best F1 by approximately 8\% but with larger variance across datasets. Thus, \(W=64\) serves as a stable common setting rather than a per-dataset optimized choice.

\section{Experimental Results}
\label{sec:experimental-results}

\begin{table*}[t]
\centering
\caption{Multivariate TSAD results.}
\label{tab:multivariate-results}
\resizebox{\textwidth}{!}{%
\begin{tabular}{lcccccccccccc}
\toprule
\multirow{2}{*}{Method} 
& \multicolumn{2}{c}{GECCO} 
& \multicolumn{2}{c}{SWAN} 
& \multicolumn{2}{c}{MSL} 
& \multicolumn{2}{c}{SMAP} 
& \multicolumn{2}{c}{PSM} 
& \multicolumn{2}{c}{SMD} \\
\cmidrule(lr){2-3}
\cmidrule(lr){4-5}
\cmidrule(lr){6-7}
\cmidrule(lr){8-9}
\cmidrule(lr){10-11}
\cmidrule(lr){12-13}
& VUS-PR $\uparrow$ & Best F1 $\uparrow$
& VUS-PR $\uparrow$ & Best F1 $\uparrow$
& VUS-PR $\uparrow$ & Best F1 $\uparrow$
& VUS-PR $\uparrow$ & Best F1 $\uparrow$
& VUS-PR $\uparrow$ & Best F1 $\uparrow$
& VUS-PR $\uparrow$ & Best F1 $\uparrow$ \\
\midrule
OmniAnomaly         & 0.612 & 0.661 & 0.746 & 0.790 & 0.628 & 0.678 & 0.578 & 0.635 & 0.703 & 0.742 & 0.677 & 0.725 \\
LSTM-AE             & 0.637 & 0.684 & 0.768 & 0.813 & 0.650 & 0.700 & 0.606 & 0.662 & 0.724 & 0.761 & 0.693 & 0.740 \\
DCDetector          & 0.681 & 0.728 & 0.805 & 0.846 & 0.698 & 0.745 & 0.648 & 0.701 & 0.756 & 0.789 & 0.733 & 0.780 \\
USAD                & 0.701 & 0.746 & 0.825 & 0.859 & 0.720 & 0.761 & 0.678 & 0.725 & 0.772 & 0.804 & 0.748 & 0.792 \\
TranAD              & 0.724 & 0.768 & 0.839 & 0.878 & 0.749 & 0.789 & 0.698 & 0.748 & 0.791 & 0.821 & 0.784 & 0.808 \\
CARLA               & 0.739 & 0.781 & 0.852 & 0.889 & 0.763 & 0.804 & 0.712 & 0.761 & 0.804 & 0.833 & 0.776 & 0.819 \\
DiffAD              & 0.766 & 0.806 & 0.862 & 0.899 & 0.782 & 0.821 & 0.738 & 0.782 & 0.826 & 0.852 & 0.789 & 0.831 \\
ImDiffusion         & 0.779 & 0.818 & 0.864 & 0.901 & 0.793 & 0.832 & 0.750 & 0.794 & 0.833 & 0.861 & 0.762 & 0.835 \\
Vanilla FM          & 0.784 & 0.825 & 0.872 & 0.910 & 0.802 & 0.839 & 0.756 & 0.801 & 0.842 & 0.868 & 0.771 & 0.824 \\
SGFM                & 0.791 & 0.833 & 0.878 & 0.913 & 0.811 & 0.848 & 0.764 & 0.807 & 0.850 & 0.875 & 0.799 & 0.842 \\
FDM                 & 0.801 & 0.841 & 0.884 & 0.918 & 0.819 & 0.857 & 0.772 & 0.812 & 0.856 & 0.879 & 0.806 & 0.848 \\
Anomaly Transformer & 0.811 & 0.849 & 0.889 & 0.922 & 0.824 & 0.861 & 0.780 & 0.820 & 0.862 & 0.884 & \textbf{0.823} & \textbf{0.860} \\
\rowcolor{green!12}
HCFM                & \textbf{0.848} & \textbf{0.881} & \textbf{0.912} & \textbf{0.941} & \textbf{0.846} & \textbf{0.879} & \textbf{0.815} & \textbf{0.852} & \textbf{0.888} & \textbf{0.909} & 0.812 & 0.849 \\
\bottomrule
\end{tabular}%
}
\end{table*}

\begin{table}[t]
\centering
\caption{Aggregate multivariate TSAD results across GECCO, SWAN, MSL, SMAP, PSM, and SMD.}
\label{tab:multivariate-agg-results}
\resizebox{\columnwidth}{!}{%
\begin{tabular}{lcccc}
\toprule
Method & AUROC $\uparrow$ & AUPRC/AP $\uparrow$ & FP@95R $\downarrow$ & Hundman F1 $\uparrow$ \\
\midrule
OmniAnomaly         & 0.781 & 0.521 & 0.774 & 0.705 \\
LSTM-AE             & 0.796 & 0.548 & 0.751 & 0.727 \\
DCDetector          & 0.823 & 0.592 & 0.704 & 0.782 \\
USAD                & 0.831 & 0.611 & 0.681 & 0.798 \\
TranAD              & 0.846 & 0.637 & 0.646 & 0.821 \\
CARLA               & 0.858 & 0.653 & 0.623 & 0.835 \\
DiffAD              & 0.871 & 0.684 & 0.592 & 0.854 \\
ImDiffusion         & 0.876 & 0.697 & 0.579 & 0.860 \\
Vanilla FM          & 0.882 & 0.712 & 0.562 & 0.870 \\
SGFM                & 0.889 & 0.724 & 0.548 & 0.878 \\
FDM                 & 0.895 & 0.734 & 0.531 & 0.884 \\
Anomaly Transformer & 0.901 & 0.744 & 0.514 & 0.891 \\
\rowcolor{green!12}
HCFM                & \textbf{0.914} & \textbf{0.768} & \textbf{0.486} & \textbf{0.905} \\
\bottomrule
\end{tabular}%
}
\end{table}

\subsection{Univariate Results}
\label{sec:ucr-main-results}

Table~\ref{tab:ucr_aggregate_results} shows that HCFM achieves the best result on every reported UCR metric, including best F1 of 0.742, AUROC of 0.922, AUPRC/AP of 0.757, and the lowest FP@95R of 0.171.

Compared with the best non-HCFM baseline, DCDetector, HCFM improves best F1 from 0.672 to 0.742, corresponding to a 10.4\% relative increase, and improves AUPRC/AP from 0.675 to 0.757, corresponding to a 12.1\% relative increase. HCFM also improves VUS-PR from 0.651 to 0.694 and Hundman F1 from 0.692 to 0.776, while reducing FP@95R from 0.201 to 0.171, a 14.9\% reduction in false positives at high recall.

Among Flow-Matching-based baselines, FDM is the best non-HCFM variant. Relative to FDM, HCFM improves best F1 from 0.588 to 0.742, corresponding to a 26.2\% relative increase, and improves AUPRC/AP from 0.594 to 0.757, corresponding to a 27.4\% relative increase. HCFM also reduces FP@95R from 0.248 to 0.171, a 31.0\% reduction. These gains reflect the advantage of component-specific divergence scoring: rather than estimating divergence over the full entangled vector field, HCFM concentrates the density-change signal in the compression head while retaining reverse-flow base-energy scoring.

The comparison with vanilla FM isolates the value of compression-divergence evidence and structured field prediction. Vanilla FM uses data-space Conditional Flow Matching with a monolithic field and base-energy scoring. Relative to vanilla FM, HCFM improves best F1 from 0.562 to 0.742, corresponding to a 32.0\% relative increase, improves AUPRC/AP from 0.566 to 0.757, corresponding to a 33.7\% relative increase, and reduces FP@95R from 0.270 to 0.171, a 36.7\% reduction. These improvements show that endpoint consistency alone is less informative than combining base energy with localized compression divergence.

\subsection{Main Results on Multivariate Benchmarks}
\label{sec:multivariate-main-results}

Table~\ref{tab:multivariate-results} reports per-dataset multivariate results across GECCO, SWAN, MSL, SMAP, PSM, and SMD. HCFM achieves the best VUS-PR and best F1 on five of the six datasets, including GECCO, SWAN, MSL, SMAP, and PSM, while remaining competitive on SMD. For example, on GECCO, HCFM improves VUS-PR from 0.801 to 0.848 and best F1 from 0.841 to 0.881 over FDM, corresponding to relative gains of 5.9\% and 4.8\%, respectively. On PSM, HCFM improves VUS-PR from 0.856 to 0.888 and best F1 from 0.879 to 0.909 over FDM, corresponding to gains of 3.7\% and 3.4\%.

The main exception is SMD, where Anomaly Transformer remains the best baseline with VUS-PR of 0.823 and best F1 of 0.860. HCFM remains competitive on this dataset, achieving VUS-PR of 0.812 and best F1 of 0.849, trailing Anomaly Transformer by 1.3\% in VUS-PR and 1.3\% in best F1. We conjecture that long-range temporal dependence may contribute to this small gap, but our analysis does not establish it as a causal explanation.

Table~\ref{tab:multivariate-agg-results} summarizes aggregate multivariate performance. HCFM achieves the best overall results, improving over the best non-HCFM aggregate baseline, Anomaly Transformer, from 0.901 to 0.914 in AUROC, from 0.744 to 0.768 in AUPRC/AP, and from 0.891 to 0.905 in Hundman F1. These correspond to relative improvements of 1.4\%, 3.2\%, and 1.6\%, respectively. HCFM also reduces FP@95R from 0.514 to 0.486, a 5.4\% reduction in false positives at high recall.

Compared with the best Flow-Matching baseline, FDM, HCFM improves AUROC from 0.895 to 0.914, AUPRC/AP from 0.734 to 0.768, and Hundman F1 from 0.884 to 0.905, corresponding to relative gains of 2.1\%, 4.6\%, and 2.4\%, respectively. HCFM also reduces FP@95R from 0.531 to 0.486, an 8.5\% reduction. Overall, these results support the central claim of HCFM: structuring the Flow Matching vector field into divergence-free transport, density-changing compression, and lightweight residual components yields more informative anomaly scores than either Jacobian-free base-energy FM or monolithic divergence-based scoring.
\subsection{Training and Inference Time Analysis} 
\label{sec:inference-time}

We further compare the computational cost of HCFM against representative TSAD
baselines. Training time is reported as wall-clock minutes, while inference latency is
reported as average milliseconds per window. The training and inference measurements in Table~\ref{tab:inference-time-baselines} are obtained on UCR dataset \texttt{001\_UCR\_Anomaly\_DISTORTED1sddb40} with \(C=1\), \(W=64\), stride one, and 44,732 evaluated test windows. Timing results are measured on a single NVIDIA RTX 4070 GPU after a warm-up pass, excluding data loading. For flow-based methods, inference
time includes reverse ODE integration and score construction. For
divergence-based methods, it also includes the derivative evaluations required
for divergence scoring. Relative inference time is normalized to HCFM.

Table~\ref{tab:inference-time-baselines} shows that HCFM provides a favorable
accuracy--efficiency tradeoff. HCFM is slower than lightweight reconstruction
or Jacobian-free flow baselines such as LSTM-AE and vanilla FM, since it
additionally evaluates compression-head divergence. However, it remains
substantially faster than FDM and several recent deep TSAD baselines. Compared
with FDM, HCFM reduces training time from \(31.5\) to \(22.8\) minutes and
inference latency from \(6.03\) to \(3.92\) ms/window. HCFM is also faster at
inference than TranAD, SGFM, Anomaly Transformer, CARLA, DCDetector, and
ImDiffusion.

These results indicate that HCFM is not merely an accuracy-oriented model with
prohibitive cost. Instead, it improves detection performance over vanilla FM
and FDM while keeping both training and inference time moderate. The main
efficiency benefit comes from localizing derivative-based scoring to the
compression component \(v_C\), rather than estimating divergence over the full
entangled vector field.

\begin{table}[t]
\centering
\caption{
Training and inference-time comparison against representative TSAD baselines.
Training time is reported as wall-clock minutes, while latency is reported as
average milliseconds per window. Relative inference time is normalized to HCFM.
}
\label{tab:inference-time-baselines}
\resizebox{\columnwidth}{!}{%
\begin{tabular}{lccc}
\toprule
Method &
Training time (min) &
Latency (ms/window) &
Rel. time \\
\midrule
LSTM-AE &
8.6 &
1.74 &
\(0.44\times\) \\

Vanilla FM &
14.2 &
2.05 &
\(0.52\times\) \\

\rowcolor{green!12}
HCFM &
22.8 &
3.92 &
\(\mathbf{1.00\times}\) \\

FDM &
31.5 &
6.03 &
\(1.54\times\) \\

TranAD &
24.7 &
7.64 &
\(1.95\times\) \\

SGFM &
36.9 &
8.13 &
\(2.07\times\) \\

Anomaly Transformer &
42.4 &
10.85 &
\(2.77\times\) \\

CARLA &
29.6 &
11.02 &
\(2.81\times\) \\

DCDetector &
48.2 &
18.43 &
\(4.70\times\) \\

ImDiffusion &
61.7 &
24.80 &
\(6.33\times\) \\
\bottomrule
\end{tabular}%
}
\end{table}

\subsection{Dimensional Scalability}
To further assess dimensional scalability, we varied the channel dimension from
\(C=1\) to \(64\) while fixing \(W=64\) and the number of evaluated windows.
Table~\ref{tab:dimensional-scalability} summarizes the resulting training and
inference costs. Increasing \(C\) by \(64\times\) increased training time by
\(2.15\times\), while inference latency per window increased by only
\(1.04\times\). The controlled setting was constructed by replicating the single
channel of \texttt{001\_UCR\_Anomaly\_DISTORTED1sddb40} along the channel axis
so that dimensionality could vary while keeping the temporal data, window
length, and number of windows fixed.

\begin{table}[h]
\centering
\caption{HCFM dimensional scalability with \(W=64\). Values in parentheses are relative
to \(C=1\).}
\label{tab:dimensional-scalability}
\resizebox{\columnwidth}{!}{%
\begin{tabular}{rcc}
\toprule
Channels \(C\) &
Training time (min) &
Latency (ms/window) \\
\midrule
1  & 22.80 (\(1.00\times\)) & 3.92 (\(1.00\times\))  \\
8  & 23.92 (\(1.05\times\)) & 3.98 (\(1.01\times\))  \\
16 & 26.52 (\(1.16\times\)) & 4.03 (\(1.03\times\))  \\
32 & 34.06 (\(1.49\times\)) & 3.98 (\(1.01\times\))  \\
64 & 49.04 (\(2.15\times\)) & 4.06 (\(1.04\times\))  \\
\bottomrule
\end{tabular}%
}
\end{table}

\section{Ablation Studies}
\label{sec:ablations}

We ablate three design choices in HCFM: (i) whether compression-head
divergence provides a more informative density-change signal than
monolithic full-field divergence, (ii) whether the
transport--compression--residual decomposition is responsible for the
gains and (iii) whether residual scaling,
residual-energy regularization, and orthogonality improve component allocation. Unless otherwise stated, all variants use the same preprocessing, window size, training budget, reverse ODE solver, and evaluation protocol as full HCFM, so differences in performance can be attributed to the ablated design
choices rather than changes in the experimental setup.

\subsection{Divergence Source and Scoring Efficiency}
\label{sec:ablation-divergence-efficiency}
Table~\ref{tab:ablation-divergence-efficiency} compares base-energy scoring without divergence, monolithic full-field divergence, and HCFM compression-head divergence. Adding full-field divergence to vanilla FM gives only modest gains, improving AUPRC from 0.566 to 0.578, while increasing relative inference time from \(1.00\times\) to \(2.71\times\). FDM improves further to 0.594 AUPRC, but remains costly because it estimates divergence over the full field \(v_\theta\). HCFM obtains stronger performance with lower derivative overhead by restricting divergence estimation to the compression head \(v_C\). 

HCFM base-only scoring already improves over vanilla FM, showing that the structured field benefits endpoint consistency. Compression divergence alone is stronger than base-only scoring, and the combined score performs best, reaching 0.757 AUPRC, 0.694 VUS-PR, 0.742 best F1, and the lowest FP@95R of 0.171 with only \(1.91\times\) relative inference time. Thus, the comparison separates the effects of Flow Matching, divergence-based scoring, and {\em structured localization of divergence}. The results show that simply adding divergence to a monolithic predictor is insufficient; localizing divergence to the compression component provides the stronger accuracy–efficiency tradeoff.

\begin{figure*}[t]
    \centering
    \includegraphics[width=\textwidth]{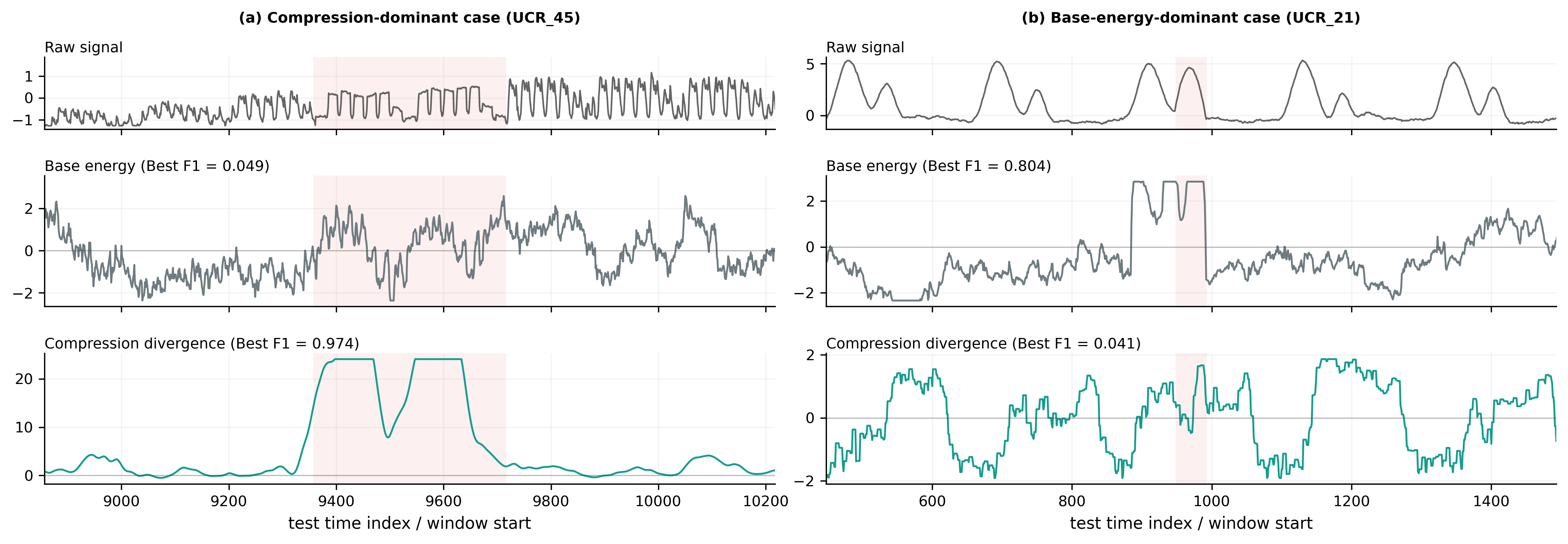}
    \caption{
    Qualitative examples illustrating complementary anomaly evidence in HCFM.
    The shaded region denotes the ground-truth anomaly interval. Each column shows
    the raw test signal, reverse-flow base energy, and compression-divergence score.
    (a) On UCR\_45, base energy provides weak anomaly separation, whereas
    compression divergence sharply localizes the anomalous interval.
    (b) On UCR\_21, the anomaly is more clearly captured by base energy than by
    compression divergence. Together, these examples motivate combining endpoint
    mismatch evidence with localized compression-divergence evidence.
    }
    \label{fig:qualitative-components}
\end{figure*}

\begin{table*}[t]
\centering
\caption{
Ablation of divergence source and scoring efficiency.
Runtime and memory are reported relative to Jacobian-free vanilla FM
base-energy scoring.
}
\label{tab:ablation-divergence-efficiency}
\resizebox{\textwidth}{!}{%
\begin{tabular}{lccccccc}
\toprule
Scoring method &
Divergence target &
AUPRC &
FP@95R &
VUS-PR &
Best F1 &
Rel. time &
Rel. memory \\
\midrule
Vanilla FM: base energy &
None &
0.566 & 0.270 & 0.589 & 0.562 & \(1.00\times\) & \(1.00\times\) \\

Vanilla FM: base \(+\) full divergence &
Full field \(v_\theta\) &
0.578 & 0.260 & 0.598 & 0.574 & \(2.71\times\) & \(2.18\times\) \\

FDM: base \(+\) full divergence &
Full field \(v_\theta\) &
0.594 & 0.248 & 0.606 & 0.588 & \(2.94\times\) & \(2.31\times\) \\

HCFM: base only &
None &
0.682 & 0.206 & 0.651 & 0.671 & \(1.18\times\) & \(1.08\times\) \\

HCFM: compression divergence only &
Compression head \(v_C\) &
0.735 & 0.180 & 0.684 & 0.724 & \(1.86\times\) & \(1.42\times\) \\

\rowcolor{green!12}
HCFM: base \(+\) compression divergence &
Compression head \(v_C\) &
\textbf{0.757} & \textbf{0.171} & \textbf{0.694} & \textbf{0.742} & \(1.91\times\) & \(1.46\times\) \\
\bottomrule
\end{tabular}%
}
\end{table*}

\subsection{Complementarity of Base Energy and Compression Divergence}
\label{sec:ablation-score-complementarity}

Figure~\ref{fig:qualitative-components} illustrates why combining base energy
and compression divergence is useful. On UCR\_45, base energy provides weak
separation while compression divergence sharply localizes the anomaly. On
UCR\_21, base energy captures the anomalous interval more clearly than
compression divergence. Thus, the two scores provide complementary anomaly evidence:
base energy measures endpoint consistency with the nominal Gaussian base, while
compression divergence captures localized density-changing deformation along
the trajectory. Combining the two therefore allows HCFM to exploit both global deviation from the nominal base distribution and local deformation of the learned flow.

\subsection{Structured Decomposition Versus Monolithic Fields}
\label{sec:ablation-decomposition}

Table~\ref{tab:ablation-decomposition} evaluates whether the structured HCFM
parameterization is responsible for the gains. Monolithic FM and monolithic FM
with full-field divergence remain well below full HCFM, indicating that the
improvement is not explained by Flow Matching alone or by simply adding
divergence to an unstructured predictor.

Removing the compression head causes the largest drop, confirming that
compression divergence is the main density-changing anomaly signal. Removing
the transport head also hurts performance, suggesting that compression and
residual components should not be forced to model both nominal movement and
density change. The no-residual variant remains competitive but trails full
HCFM, showing that a lightweight residual head helps absorb remaining mismatch
without dominating the field.

\begin{table*}[t]
\centering
\caption{
Ablation of the HCFM structured field decomposition.
All variants use matched preprocessing, training budget, and reverse-integration
settings.
}
\label{tab:ablation-decomposition}
\resizebox{\textwidth}{!}{%
\begin{tabular}{lccccccc}
\toprule
Variant &
Field parameterization &
Score used &
AUPRC &
FP@95R &
VUS-PR &
Best F1 &
Hundman F1 \\
\midrule
Vanilla FM &
\(v_\theta\) &
Base energy &
0.566 & 0.270 & 0.589 & 0.562 & 0.586 \\

Vanilla FM \(+\) divergence &
\(v_\theta\) &
Base \(+\nabla\!\cdot v_\theta\) &
0.578 & 0.260 & 0.598 & 0.574 & 0.598 \\

No transport head &
\(v_C+\gamma_R v_R\) &
Base \(+\nabla\!\cdot v_C\) &
0.724 & 0.185 & 0.676 & 0.713 & 0.743 \\

No compression head &
\(v_T+\gamma_R v_R\) &
Base energy &
0.647 & 0.226 & 0.632 & 0.636 & 0.665 \\

No residual head &
\(v_T+v_C\) &
Base \(+\nabla\!\cdot v_C\) &
0.743 & 0.178 & 0.686 & 0.728 & 0.759 \\

\rowcolor{green!12}
Full HCFM &
\(v_T+v_C+\gamma_R v_R\) &
Base \(+\nabla\!\cdot v_C\) &
\textbf{0.757} & \textbf{0.171} & \textbf{0.694} & \textbf{0.742} & \textbf{0.776} \\
\bottomrule
\end{tabular}%
}
\end{table*}

\subsection{Component Allocation and Residual Control}
\label{sec:ablation-component-allocation}

We finally study whether residual scaling, residual-energy regularization, and
orthogonality help maintain a clean component allocation. The residual head
should absorb leftover mismatch without becoming a second density-changing
component. We quantify this using the residual-to-compression divergence ratio
\[
\rho_{\mathrm{div}}
=
\frac{
\mathbb{E}_{x,t}\left[|\nabla\!\cdot v_R(x,t)|\right]
}{
\mathbb{E}_{x,t}\left[|\nabla\!\cdot v_C(x,t)|\right]
}.
\]
Large \(\rho_{\mathrm{div}}\) indicates that the residual head is absorbing
density-changing behavior that should remain localized in the compression head.

Table~\ref{tab:ablation-residual-control} shows that both \(\gamma_R\) and
\(\lambda_R\) matter. A large unregularized residual
\((\gamma_R=1.0,\lambda_R=0)\) performs poorly and gives
\(\rho_{\mathrm{div}}=1.721\), indicating residual dominance. Reducing the
residual scale and adding residual-energy regularization improves both
performance and component allocation. The default setting
\((\gamma_R=0.25,\lambda_R=5{\times}10^{-4})\) achieves the best detection
results, while a stronger residual penalty further lowers \(\rho_{\mathrm{div}}\)
but slightly hurts performance.

\begin{table}[h]
\centering
\caption{
Residual-scale and residual-regularization ablation for HCFM.
All variants use the same transport and compression heads with
\(\lambda_{\mathrm{ortho}}=10\). Lower \(\rho_{\mathrm{div}}\) indicates less residual-head
absorption of density-changing behavior.
}
\label{tab:ablation-residual-control}
\resizebox{\columnwidth}{!}{%
\begin{tabular}{cccccc}
\toprule
\(\gamma_R\) &
\(\lambda_R\) &
AUPRC &
FP@95R \(\downarrow\) &
Best F1 &
\(\rho_{\mathrm{div}}\downarrow\) \\
\midrule
\(1.00\) &
\(0\) &
0.686 & 0.232 & 0.668 & 1.721 \\

\(1.00\) &
\(5{\times}10^{-4}\) &
0.715 & 0.207 & 0.697 & 1.084 \\

\(0.50\) &
\(0\) &
0.723 & 0.199 & 0.705 & 0.923 \\

\(0.50\) &
\(5{\times}10^{-4}\) &
0.741 & 0.181 & 0.727 & 0.514 \\

\(0.25\) &
\(0\) &
0.731 & 0.193 & 0.713 & 0.682 \\

\rowcolor{green!12}
\(0.25\) &
\(5{\times}10^{-4}\) &
\textbf{0.757} & \textbf{0.171} & \textbf{0.742} & 0.287 \\

\(0.25\) &
\(1{\times}10^{-3}\) &
0.750 & 0.174 & 0.735 & \textbf{0.193} \\
\bottomrule
\end{tabular}%
}
\end{table}

Table~\ref{tab:hcfm_ortho_ablation} shows that moderate orthogonality is also
important. Increasing \(\lambda_{\mathrm{ortho}}\) improves performance up to
\(\lambda_{\mathrm{ortho}}=10\), the setting used in the main model. A larger
value of \(20\) slightly degrades detection, suggesting that excessive
separation among heads can become over-constraining.

\begin{table}[h]
\centering
\caption{
Orthogonality-weight ablation for HCFM.
}
\label{tab:hcfm_ortho_ablation}
\resizebox{\columnwidth}{!}{%
\begin{tabular}{lccccc}
\toprule
\(\lambda_{\mathrm{ortho}}\) &
Best F1 &
AUPRC &
AUROC &
VUS-PR &
FP@95R \(\downarrow\) \\
\midrule
\(0\)  & 0.704 & 0.721 & 0.912 & 0.671 & 0.194 \\
\(1\)  & 0.724 & 0.739 & 0.917 & 0.684 & 0.183 \\
\(2\)  & 0.732 & 0.748 & 0.920 & 0.689 & 0.177 \\
\(10\) & \textbf{0.742} & \textbf{0.757} & \textbf{0.922} & \textbf{0.694} & \textbf{0.171} \\
\(20\) & 0.736 & 0.751 & 0.919 & 0.690 & 0.174 \\
\bottomrule
\end{tabular}%
}
\end{table}

\section{Conclusion and Future Work}
\label{sec:conclusion}

We introduced Helmholtz Component Flow Matching (HCFM), an unsupervised TSAD framework that addresses the entanglement of nominal transport and local density-changing behavior in monolithic FM predictors. HCFM parameterizes one final field through divergence-free transport, density-changing compression, and lightweight residual components, and combines reverse-flow base energy with compression-head divergence for complementary anomaly evidence.

Experiments on 250 UCR datasets and six multivariate benchmarks show consistent gains over vanilla FM, FDM, and strong TSAD baselines, with improved detection quality and lower divergence-scoring cost. Ablations further show that the gains arise from the structured field parameterization rather than simply adding divergence: removing the compression head causes the largest degradation, while the transport and residual components also contribute to component allocation.

More broadly, HCFM suggests {\em component-structured parameterization as a design principle for Flow Matching} when different vector-field roles require different constraints or diagnostics. 

Future work will explore incorporating attention mechanisms into the component-structured predictor, especially for multivariate time series with long-range temporal dependencies and cross-channel interactions. This could allow HCFM to retain its transport-compression decomposition while improving global context modeling.

\section*{AI Usage Disclosure}

ChatGPT was used in polishing some of the writing and drafting/refining the architecture diagram. All conceptual ideas, formulations, experiments, results, and conclusions are entirely the authors' original work. The authors retain full responsibility for the final content of this work.

\bibliographystyle{IEEEtran}
\bibliography{icde}

\end{document}